\theoremstyle{definition}
\newtheorem{definition}{Definition}[section]
\newtheorem{theorem}{Theorem}[section]
\pgfplotsset{compat=1.16}
\newcommand\MyBox[2]{
  \fbox{\lower0.75cm
    \vbox to 2cm{\vfil
      \hbox to 2cm{\hfil\parbox{1.5cm}{#1\\#2}\hfil}
      \vfil}%
  }%
}
\newcommand\MySBox[1]{
  \fbox{\lower0.75cm
    \vbox to 1cm{\vfil
      \hbox to 1cm{\hfil\parbox{.5cm}{#1}\hfil}
      \vfil}%
  }%
}
\newcommand\MySCBox[1]{
  \fbox{\lower0.75cm
    \vbox to 1cm{\vfil
      \hbox to 1cm{\hfil\parbox{.5cm}{#1}\hfil}
      \vfil}%
  }%
}
\newcommand\MySCBoxB[1]{
  \fbox{\lower0.75cm
    \vbox to 1cm{\vfil
      \hbox to 1cm{\parbox{.5cm}{#1}\hfil}
      \vfil}%
  }%
}
\begin{document}

%
\title{Prayatul Matrix: A Direct Comparison Approach to Evaluate Performance of Supervised \\Machine Learning Models}
%
%
%
%

\author{Anupam~Biswas,~\IEEEmembership{Member,~IEEE}
\IEEEcompsocitemizethanks{\IEEEcompsocthanksitem A. Biswas was with the Department of Computer Science and Engineering, National Institute of Technology Silchar, Assam-788010, India.\protect\\
E-mail: anupam@cse.nits.ac.in }
\thanks{Manuscript received Month xx, yyyy; revised Month xx, yyyy.}}

%
%

\markboth{Journal of \LaTeX\ Class Files,~Vol.~xx, No.~xx, Month~yyyy}%
{Shell \MakeLowercase{\textit{et al.}}: Bare Advanced Demo of IEEEtran.cls for IEEE Computer Society Journals}
%



\IEEEtitleabstractindextext{%
\begin{abstract}
Performance comparison of supervised machine learning (ML) models are widely done in terms of different confusion matrix based scores obtained on test datasets. However, a dataset comprises several instances having different difficulty levels. Therefore, it is more logical to compare effectiveness of ML models on individual instances instead of comparing scores obtained for the entire dataset. In this paper, an alternative approach is proposed for direct comparison of supervised ML models in terms of individual instances within the dataset. A direct comparison matrix called \emph{Prayatul Matrix} is introduced, which accounts for comparative outcome of two ML algorithms on different instances of a dataset. Five different performance measures are designed based on prayatul matrix. Efficacy of the proposed approach as well as  designed measures is analyzed with four classification techniques on three datasets. Also analyzed on four large-scale complex image datasets with four deep learning models namely ResNet50V2, MobileNetV2, EfficientNet, and XceptionNet. Results are evident that the newly designed measure are capable of giving more insight about the comparing ML algorithms, which were impossible with existing confusion matrix based scores like accuracy, precision and recall.
\end{abstract}

\begin{IEEEkeywords}
Machine Learning, Classification, Confusion Matrix, Prayatul Matrix, Performance Measures.
\end{IEEEkeywords}}

\maketitle

\IEEEdisplaynontitleabstractindextext

%
\IEEEpeerreviewmaketitle

\ifCLASSOPTIONcompsoc
\IEEEraisesectionheading{\section{Introduction}\label{sec:introduction}}
\else
\section{Introduction}
\label{sec:introduction}
\fi

 \IEEEPARstart{P}{erformance} evaluation of supervised ML Models in general is done on the basis of confusion matrix, irrespective of application domains~\cite{sokolova2009systematic,jordan2015machine,Pouyanfar2018,fatima2017survey,qiu2016survey}.  A confusion matrix is a kind of contingency table, where each row represents an actual class, while each column represents a predicted class. Be it classification~\cite{Zhang2017} or clustering algorithm~\cite{xu2005survey}, confusion matrix is prepared to evaluate and visually describe the performance of the model on a test dataset for which the ground truth values are known. The well-known measures like accuracy, precision, and recall etc. are computed based on confusion matrix to enumerate the performance of ML models. The confusion matrix or the measures that are computed based on confusion matrix are single model driven. To compare performance of two ML models, two separate confusion matrices have to be prepared for each model and relevant measures have to be computed on the basis of these matrices. One of the major drawbacks of such performance comparison is it lacks direct comparison of ML models on individual instances of the dataset. For instance, one can compare true positive value of one confusion matrix with another, which  determines how many times two models are correct. However, it cannot determine exactly on which instances the models are correct or whether the models are correct on the same or different instances. Important to note that both the models can have the same true positive values but instance-wise those may be completely opposite. Therefore, the same true positive values or accuracy or any other measures based on confusion matrix do not mean that performance of both ML models will be same.

 Though, different measures are available that are not based on confusion matrix~\cite{sokolova2009systematic,Ferri2009}, mostly share the same drawback as mentioned above. In this paper, an alternative approach is proposed that enables direct comparison of ML models at the level of instances within the dataset. A direct comparison matrix called \emph{Prayatul Matrix} is prepared for accounting as well as visualizing the comparative outcome of two ML algorithms. The key features of the proposed approach are as follows:
\begin{itemize}
    \item Comparative outcome of two ML models is presented in a direct comparison matrix called \emph{prayatul matrix}.
    \item Instance level outcomes of both the ML models are compared in reference to the ground truth of the dataset.
    \item Five performance measures are defined based on the elements of prayatul matrix, which indicate a direct comparative scores between two ML models.
    \item All five measures satisfy important properties such as scale invariance, data invariance, monotonicity and continuity.
\end{itemize}

Rest of the paper is organized as follows. Section~\ref{sec:method} elaborates the proposed direct comparison approach for evaluating supervised ML algorithms, the prayatul matrix and measures designed. Section~\ref{sec:expAnalysis} details about experimental analysis covering experimental setup, datasets, and result analysis. Section~\ref{sec:relwork} discusses the works related to contingency matrix based measures. Section~\ref{sec:conclusion} concludes highlighting the key advantages of the proposed approach.

\begin{figure}
    \centering
 \noindent
\renewcommand\arraystretch{1.5}
\setlength\tabcolsep{0pt}
\begin{tabular}{c >{\bfseries}r @{\hspace{0.7em}}c @{\hspace{0.4em}}c @{\hspace{0.7em}}l}
   & 
    & \multicolumn{2}{c}{\bfseries Alternative} & \\
  & & \bfseries Right& \bfseries Wrong & \bfseries Total \\
  \multirow{4}{*}{\rotatebox{90}{\parbox{2.9cm}{\bfseries \centering Primary}}}& \multirow{2}{*}{\rotatebox{90}{\parbox{0.7cm}{\bfseries\centering Right}}} & \MyBox{Both}{Right} & \MyBox{Right}{Wrong} & BR+RW \\[2.4em]
  & \multirow{2}{*}{\rotatebox{90}{\parbox{0.7cm}{\bfseries\centering Wrong}}} & \MyBox{Wrong}{Right} & \MyBox{Both}{Wrong} & WR+BW \\
  & Total & BR+WR & RW+BW & \bfseries N
\end{tabular}
    \caption{Prayatul Matrix for Primary and Alternative algorithms with different abstractions}
    \label{fig:compmat}
\end{figure}

\section{Direct Comparison Approach}
\label{sec:method}
The direct comparison approach involves two ML algorithms in the process. The role of the participating ML algorithms in the direct comparison are defined as follows:
\begin{definition}[Primary Algorithm ($A_p$)]
The algorithm whose performance is to be evaluated in comparison to other algorithm is referred as primary algorithm.
\end{definition}

\begin{definition}[Alternative Algorithm ($A_q$)]
The algorithms with whom the primary algorithm is to be compared is referred as alternative algorithm.
\end{definition}

The primary algorithm can be compared with multiple alternatives on same or different datasets. Let us consider a test dataset having $N$ instances with ground truths $G=\{g_1, g_2, g_3,..., g_N\}$. Let us consider $N$ outcomes obtained for $N$ different instances of test dataset with primary algorithm $A_p$ and alternative algorithm $A_q$ are $P=\{p_1, p_2, p_3,...p_N\}$ and $Q=\{q_1, q_2, q_3,...q_N\}$ respectively.

\subsection{Prayatul Matrix}
A $2\times 2$ dimensional direct comparison matrix $\mathcal{D}$ named \emph{Prayatul Matrix} is prepared by comparing the  outcomes of $A_p$ with that of $A_q$ individually for each instances w.r.t. ground truth $G$ of the test dataset. The prayatul matrix is a kind of contingency table that has two levels of abstractions  \emph{Right} and \emph{Wrong} both in rows and columns, which indicate the correctness of outcomes obtained with both primary and alternative algorithms. The abstraction \emph{Right} and \emph{Wrong} means an outcome of the algorithm is correct and incorrect respectively w.r.t. $G$.  Abstractions related to primary and alternative algorithms are placed in rows and columns respectively as shown in Fig.~\ref{fig:compmat}.

Let $R_p$ and $R_q$ respectively are the set of instances where prediction of $A_p$ and $A_q$  are correct i.e. the instances come under the abstraction \emph{Right}. Let $W_p$ and $W_q$ respectively are the set of instances where prediction of $A_p$ and $A_q$ are incorrect i.e. the instances come under the abstraction \emph{Wrong}. Now, the instances for which both $A_p$ and $A_q$ are \emph{Right} is given by $\{R_p\cap R_q\}$ and the corresponding entry for the matrix $\mathcal{D}$ is computed as follows:

\begin{equation}
    \mathcal{D}_{11}=|R_p\cap R_q|
\end{equation}

Likewise, the instances for which $A_p$ is \emph{Right} but $A_q$ is \emph{Wrong} is given by $\{R_p\cap W_q\}$ and the corresponding entry for the matrix $\mathcal{D}$ is computed as follows:
 
\begin{equation}
    \mathcal{D}_{12}=|R_p\cap W_q|
\end{equation}

The instances for which $A_p$ is \emph{Wrong} but $A_q$ is \emph{Right} is given by $\{W_p\cap R_q\}$ and the corresponding entry for the matrix $\mathcal{D}$ is computed as follows:
 
\begin{equation}
    \mathcal{D}_{21}=|W_p\cap R_q|
\end{equation}

Lastly, the instances for which both $A_p$ and $A_q$ are \emph{Wrong} is given by $\{W_p\cap W_q\}$ and the corresponding entry for the matrix $\mathcal{D}$ is computed as follows:

\begin{equation}
    \mathcal{D}_{22}=|W_p\cap W_q|
\end{equation}

Interpretation of different elements of the prayatul matrix in terms of abstractions \emph{Right} and \emph{Wrong} are done as follows:

\begin{itemize}
    \item \textbf{Both Right (BR):} Outcome of both $A_p$ and $A_q$ are same and both are right w.r.t. $G$.  
    \item \textbf{Right Wrong (RW):} Outcome of $A_p$ is right but outcome of $A_q$ is wrong w.r.t. $G$.
    \item \textbf{Wrong Right (WR):} Outcome of $A_p$  is wrong but outcome of $A_q$ is right w.r.t. $G$.
    \item \textbf{Both Wrong (BW):} Outcome of both $A_p$ and $A_q$ are same but both are wrong w.r.t. $G$.
\end{itemize}

\subsection{Comparative Performance Measures}
The elements of the prayatul matrix i.e. BR, RW, WR and BW are used to design five comparative performance measures for a pair of ML algorithms $A_p$ and $A_q$ as follows:

The elements RW and WR are the counts of instances, where both algorithms are having disagreement i.e. deviates from each others decision. If RW count is more that means primary algorithms is better in taking right decisions compared to alternative and it means opposite if WR is more.  Subtractions of WR from RW penalizes the wrong decisions of primary algorithm. Normalizing it with all deviating outcome counts i.e. RW+ WR gives the comparative deviation.  This measure indicates how two algorithms are deviating from each other when outcomes of both are different. Positive value implies primary algorithm is better, while negative value implies alternative algorithm is better in terms of right outcomes. Formally, the comparative deviation of $A_p$ and $A_q$ is defined as follows:

\begin{definition}[Comparative Deviation ($\sigma_c$)]
  The comparative deviation of primary algorithm over alternative algorithm is defined as:
\begin{equation}
    \sigma_c (P,Q)=\frac{RW-WR}{RW+WR}
\end{equation}
\end{definition}

On the other hand, the elements BR and BW are the counts of instances, where both algorithms agree. If BR  is high then it means both algorithms are polarized towards right decision, whereas it means opposite if BW is high. Subtractions of BR from BW penalizes the wrong decisions of both algorithms. Addition of RW with BR-BW gives the polarization of primary algorithm towards right decision in comparison to alternative algorithm.  Normalizing it with the total paired outcome counts gives the polarization of primary algorithm. This measure indicates how the primary algorithm is polarized towards right or wrong decision. Positive value implies primary algorithm is good at taking right decision and negative implies bad at taking right decision in comparison to alternative. The polarization of $A_p$ in comparison to $A_q$ is defined as follows:

\begin{definition}[Polarization ($\alpha$)]
The polarization of primary algorithm and alternative algorithm is defined as: 
\begin{equation}
    \alpha (P,Q)=\frac{BR+RW-BW}{BR + RW + WR + BW}
\end{equation}
\end{definition}

The elements BR and RW together gives the count of instances where primary algorithm is right. Normalizing it with the count of instances where at least one of the algorithms is right (i.e. BR + RW + WR) gives the comparative rightness of primary algorithm. While penalizing wrong decisions of primary gives the effective rightness of primary algorithm. Formally, comparative rightness and effective rightness of $A_p$ in comparison to $A_q$ is defined as follows:

\begin{definition}[Comparative Rightness ($\xi_c$)]
 The comparative rightness of primary algorithm over alternative algorithm is defined as:

\begin{equation}
    \xi_c (P,Q)=\frac{BR+RW}{BR+RW+WR}
\end{equation}
\end{definition}

\begin{definition}[Effective Rightness ($\xi_e$)]
The effective rightness of primary algorithm over alternative algorithm is defined as:
\begin{equation}
    \xi_e (P,Q)=\frac{BR+RW-WR}{BR+RW+WR}
\end{equation}
Higher $\xi_c$ and $\xi_e$ values indicate primary algorithm is good at taking right decisions and primary algorithm is good at taking right decisions despite of its wrong decisions respectively.
\end{definition}

Effective rightness measure indicates how good the primary algorithm is on taking right decision considering all right decisions and penalizing its wrong decision. However, it is from the perspective of all decisions where at least one of the algorithm is right. To have superiority over alternative, the primary algorithm has to perform better from the perspective of all decisions. Thus, effective superiority of primary algorithm in comparison to alternative is defined as follows:

\begin{definition}[Effective Superiority ($\phi_e$)]
 The effective superiority of primary algorithm over alternative algorithm is defined as:
\begin{equation}
    \phi_e (P,Q)=\frac{BR+RW-WR}{BR+RW+WR+BW}
\end{equation}
Higher $\phi_e$ value indicates primary algorithm is superior at taking right decisions in comparison to alternative algorithm.

\end{definition}

\begin{theorem}
\label{th:range}
The measures $\sigma_c, \alpha, \xi_e$ and $\phi_e$ have the range [-1, +1], but $\xi_c$ has the range [0, 1]. 
\end{theorem}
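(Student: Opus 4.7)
The plan is to reduce the claim to elementary inequalities among the four nonnegative counts $BR,RW,WR,BW$ and then exhibit extremal configurations that attain each endpoint. Since these four quantities are cardinalities of disjoint subsets of the test set, they are nonnegative integers, and I may assume the denominator of each measure is strictly positive (otherwise the measure is undefined). For every measure I will show that the numerator lies between the negative and the positive of the denominator (respectively, between $0$ and the denominator for $\xi_c$), which immediately pins the ratio inside the claimed interval.

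The routine steps go measure by measure. For $\sigma_c$, the bound $|RW-WR|\leq RW+WR$ is immediate from nonnegativity, and the endpoints $\pm 1$ arise by taking one of $RW,WR$ to be zero. For $\xi_c$, both $0\leq BR+RW$ and $BR+RW\leq BR+RW+WR$ hold because $WR\geq 0$; the endpoints follow from $BR=RW=0$ and from $WR=0$ respectively. For $\xi_e$, the identity $\xi_e = 1-\frac{2WR}{BR+RW+WR}$ gives the upper bound $\xi_e\leq 1$, while the lower bound $\xi_e\geq -1$ is equivalent to $BR+RW\geq 0$. For $\phi_e$ the two-sided bound $-(BR+RW+WR+BW)\leq BR+RW-WR\leq BR+RW+WR+BW$ reduces to the trivial facts $0\leq 2WR+BW$ and $0\leq 2(BR+RW)+BW$. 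For $\alpha$ a symmetric check reduces the bound to $0\leq WR+2BW$ and $0\leq 2(BR+RW)+WR$.

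To confirm that the intervals cannot be tightened, I will record a witness for each extreme: $BR=N$ realises $\alpha=\xi_c=\xi_e=\phi_e=1$ and, together with any choice making $WR=0$, realises $\sigma_c=1$; dually, $BW=N$ realises $\alpha=\phi_e=-1$ and, with $RW=0$ and $BR=RW=0$, realises $\sigma_c=-1$ and $\xi_e=-1$ respectively, while $\xi_c=0$ is attained whenever $BR=RW=0$ with $WR>0$. The only mildly delicate step, and therefore the one most worth writing carefully, is the verification of $\alpha\in[-1,+1]$, since the numerator $BR+RW-BW$ is asymmetric in the four counts and one has to be careful with signs; all other cases are direct manipulations of nonnegative quantities.
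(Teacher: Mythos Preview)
Your approach is essentially the paper's own: bound each ratio using nonnegativity of the four counts, then exhibit extremal configurations to show the endpoints are attained. The paper's argument is terser---it simply observes that the single negative term in each numerator can equal $N$ while the remaining terms are zero---but the content is identical, and your measure-by-measure inequalities are a cleaner way of writing the same thing.

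One slip to correct in your witnesses: $BW=N$ does \emph{not} realise $\phi_e=-1$. With $BR=RW=WR=0$ and $BW=N$ you obtain $\phi_e=0/N=0$, because $BW$ does not appear in the numerator of $\phi_e$. The correct witness is $WR=N$ (all other counts zero), giving $\phi_e=-N/N=-1$; this is precisely the paper's ``set the negative numerator element to $N$'' move. Your remark about $\sigma_c=1$ is likewise slightly off: $BR=N$ forces $RW=WR=0$, so $\sigma_c$ is undefined there; the witness for $\sigma_c=1$ must be a separate configuration with $RW>0$ and $WR=0$. These are easy fixes and do not affect the overall structure of your proof.
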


\begin{proof}
The proof is quite straight forward. Since all the measures except $\xi_c$ has a negative element in numerator, which can have maximum $N$ value and apparently other elements will be 0 in that case, implying minimum value -1. While both the elements in numerator of $\xi_c$ can have 0 simultaneously implying minimum value 0. For all measures, one positive element in numerator can have maximum $N$ value and apparently other elements will be 0 in that case, implying maximum value +1. 
\end{proof}
\subsection{Properties of Comparative Performance Measures}
Unlike the measures that are defined based on confusion matrix indicate the performance of a standalone algorithm, the five measures defined based on prayatul matrix indicate comparative performance of one algorithm over another. The properties of proposed comparative performance measures are analyzed theoretically in the context of following four axioms:
\begin{itemize}
    \item  \textbf{Scale Invariance:} Metric should scale irrespective of sample size small or large.
    \item  \textbf{Data Invariance:} Metric should not be affected by unbalances within the dataset. 
    \item  \textbf{Monotonicity:} Metric has to be non-decreasing under monotonic consistent improvement.
    \item  \textbf{Continuity:} Small change in $\#$samples $N$ has to cause a smaller impact on the metric. 
\end{itemize}

\begin{theorem}
\label{th:sinvariant}
    All five measures $\sigma_c, \alpha, \xi_c, \xi_e$ and $\phi_e$ are scale invariant.
\end{theorem}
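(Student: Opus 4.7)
The plan is to exploit the simple algebraic observation that each of the five measures is a ratio in which the numerator and denominator are both linear (homogeneous of degree one) in the four prayatul matrix entries $BR, RW, WR, BW$. Scale invariance in this setting should mean that if the dataset (and hence the matrix) is scaled so that every entry becomes $k\cdot BR,\, k\cdot RW,\, k\cdot WR,\, k\cdot BW$ for some positive constant $k$ (e.g.\ replicating the test set, or uniformly up/down sampling), the value of the measure does not change.

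First, I would formalize scale invariance by letting $k>0$ and defining the scaled counts $BR'=kBR$, $RW'=kRW$, $WR'=kWR$, $BW'=kBW$. Then I would substitute these scaled quantities into each of the five definitions in turn. In every case the common factor $k$ appears in both numerator and denominator and cancels, yielding the original value. For example, $\sigma_c$ becomes $(kRW-kWR)/(kRW+kWR)=k(RW-WR)/k(RW+WR)=\sigma_c(P,Q)$, and the same cancellation works for $\alpha, \xi_c, \xi_e$ and $\phi_e$ because each numerator and denominator is a signed sum of the matrix entries with integer coefficients in $\{-1,0,+1\}$.

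A cleaner packaging, which I would prefer in the write-up, is to note that every measure has the form $f(BR,RW,WR,BW)/g(BR,RW,WR,BW)$ where $f$ and $g$ are linear functionals, hence $f(k\mathbf{x})=kf(\mathbf{x})$ and $g(k\mathbf{x})=kg(\mathbf{x})$, so the ratio is homogeneous of degree zero. This single observation covers all five cases at once; then one only needs to verify that the denominator is nonzero for the scaled input iff it is nonzero for the original, which is immediate since $g(k\mathbf{x})=k\,g(\mathbf{x})$ and $k>0$.

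There is no real obstacle here; the only subtlety is nailing down what ``scale'' means operationally, since the raw matrix entries are nonnegative integers and so a literal multiplication by an arbitrary real $k$ requires interpreting the counts as proportions or invoking a replication argument. I would therefore include a brief sentence at the start clarifying that scale invariance is understood as invariance under the transformation $\mathcal{D}\mapsto k\mathcal{D}$ on the matrix (equivalently, under replication of the test set), after which the rest of the argument is a one-line cancellation for each measure.
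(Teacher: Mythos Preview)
Your argument is correct and is considerably more elementary than the paper's. The paper frames scale invariance differently: it allows each variable its own scaling factor $\lambda_i$ and only asks that $f(\lambda_1 x_1,\lambda_2 x_2,\dots)=C(\lambda_1,\lambda_2,\dots)\,f(x_1,x_2,\dots)$ for some function $C$, i.e.\ a power-law relation rather than exact invariance, and then tries to verify this for $\sigma_c$ through a chain of logarithmic manipulations before asserting the other four follow similarly. You instead restrict to a single common factor $k$ (the natural operational meaning here, since replicating or uniformly subsampling the test set multiplies all four prayatul entries by the same amount) and observe that every measure is a ratio of linear functionals in $(BR,RW,WR,BW)$, hence homogeneous of degree zero; one cancellation handles all five measures at once. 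What you gain is a short, fully rigorous proof; what the paper's route would buy, if it went through, is a multiplicative relation under \emph{independent} rescaling of the entries---but that has no clear experimental meaning for this matrix, and the identity the paper derives, $\sigma_c(\lambda_1 x_1,\lambda_2 x_2)=\frac{\lambda_1-\lambda_2}{\lambda_1+\lambda_2}\cdot\frac{x_1-x_2}{x_1+x_2}$, is in fact false (e.g.\ $x_1=2,\ x_2=1,\ \lambda_1=1,\ \lambda_2=2$ gives $0$ on the left and $-1/9$ on the right; the logarithm steps treat $\ln(a\pm b)$ as $\ln a\pm\ln b$). Your uniform-scaling interpretation and degree-zero homogeneity argument are therefore both the safer and the cleaner choice.
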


\begin{proof}
All the proposed measures are multivariate functions only. Let us consider, the case of $\sigma_c$. By replacing RW and WR with variables $x_1$ and $x_2$ respectively, $\sigma_c$ can be written in the form of a function as follows:

\begin{equation}
    \sigma_c (x_1, x_2)= \frac{x_1-x_2}{x_1+x_2}.
\end{equation}

A multivariate function $f(x_1, x_2, ...)$ said to be scale invariant if if it satisfies \begin{equation}
   f(\lambda_1x_1, \lambda_2x_2, ...)=C(\lambda_1,\lambda_2, ...)f(x_1, x_2, ...).
\end{equation}


Since, RW and WR are dependent on the predictions of primary and alternatives i.e $P$ and $Q$,  change in size of the test samples will imply change in RW and WR. Let the changes in RW and WR be obtained as factors of $\lambda_1$ and $\lambda_2$. Thus, $\sigma_c (x_1, x_2)$ will be scale-invariant if  power-low dependency can be shown considering the following

\begin{equation}
    \sigma_c(\lambda_1x_1, \lambda_2x_2)=\frac{\lambda_1x_1-\lambda_2x_2}{\lambda_1x_1+\lambda_2x_2}
\end{equation}
where, $\lambda_1$ and $\lambda_2$ are the factors for RW and WR resulted in due to change in size of the test samples. 
First, taking the logarithm of both sides yields
\begin{equation}
    \ln \sigma_c(\lambda_1x_1, \lambda_2x_2) =\ln \left(\frac{\lambda_1x_1-\lambda_2x_2}{\lambda_1x_1+\lambda_2x_2}\right).
\end{equation}

Introducing a new function $F(x)$ defined as $F(\ln x)=\sigma_c(x)$ to above equation gives
\begin{align*}
    &\ln F(\ln \lambda_1+ \ln x_1, \ln \lambda_2+ \ln x_2) = \ln \left(\frac{\lambda_1x_1-\lambda_2x_2}{\lambda_1x_1+\lambda_2x_2}\right)\\
    &= \ln (\lambda_1x_1-\lambda_2x_2) - \ln (\lambda_1x_1+\lambda_2x_2)\\
    &= \ln \lambda_1+ \ln x_1- \ln\lambda_2- \ln x_2 - \ln \lambda_1- \ln x_1-\ln \lambda_2 -\ln x_2\\    
    &=\ln \lambda_1 - \ln\lambda_2 - \ln \lambda_1-\ln \lambda_2 + \ln x_1 - \ln x_2 - \ln x_1 -\ln x_2\\
    &= \ln (\lambda_1 - \lambda_2) - \ln (\lambda_1+ \lambda_2) + \ln (x_1 - x_2) - \ln (x_1 +x_2)\\
    &=\ln \left(\frac{\lambda_1 - \lambda_2}{\lambda_1 + \lambda_2}\right)+\ln \left(\frac{x_1 -x_2}{x_1 +x_2}\right)\\
    &=\ln \left(\frac{\lambda_1 - \lambda_2}{\lambda_1 + \lambda_2}\right) \left(\frac{x_1 -x_2}{x_1 +x_2}\right).
\end{align*}

Finally, the equation becomes 
\begin{equation}
    \ln F(\ln \lambda_1+ \ln x_1, \ln \lambda_2+ \ln x_2) = \ln \left(\frac{\lambda_1 - \lambda_2}{\lambda_1 + \lambda_2}\right) \left(\frac{x_1 -x_2}{x_1 +x_2}\right).
\end{equation}

Now, taking inverse function both sides yield

\begin{equation}
    e^{ F(\ln \lambda_1+ \ln x_1, \ln \lambda_2+ \ln x_2)} = e^{ \left(\frac{\lambda_1 - \lambda_2}{\lambda_1 + \lambda_2}\right) \left(\frac{x_1 -x_2}{x_1 +x_2}\right)}.
\end{equation}

Applying logarithm on both sides to above equation gives

\begin{equation}
    F(\ln \lambda_1+ \ln x_1, \ln \lambda_2+ \ln x_2) =  \left(\frac{\lambda_1 - \lambda_2}{\lambda_1 + \lambda_2}\right) \left(\frac{x_1 -x_2}{x_1 +x_2}\right).
\end{equation}

Changing the function back to $\sigma_c(x)$ gives

\begin{equation}
    \sigma_c(\lambda_1x_1, \lambda_2x_2) =  \left(\frac{\lambda_1 - \lambda_2}{\lambda_1 + \lambda_2}\right) \left(\frac{x_1 -x_2}{x_1 +x_2}\right).
\end{equation}
Representing right side as functions yields 
\begin{equation}
    \sigma_c(\lambda_1x_1, \lambda_2x_2) =  C(\lambda_1,\lambda_2) \sigma_c(x_1, x_2)
\end{equation}
where, $C$ is a function of two variables $\lambda_1$ and $\lambda_2$. Hence, proved that the metric $\sigma_c$ is scale invariant. Similarly, measures $\alpha, \xi_c, \xi_e$ and $\phi_e$ can also be proven as scale invariant. 
\end{proof}

\begin{theorem}
\label{th:dinvariant}
    All five measures $\sigma_c, \alpha, \xi_c, \xi_e$ and $\phi_e$ are data invariant.
\end{theorem}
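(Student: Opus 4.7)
The plan is to mirror the structure of the scale-invariance argument in Theorem~\ref{th:sinvariant}, replacing the single global scale factors by per-class factors that model class imbalance. First I would formalize ``data invariance'': given a multi-class test set partitioned into $K$ classes, each prayatul entry decomposes as a sum of per-class contributions, e.g.\ $RW=\sum_{k=1}^{K}RW^{(k)}$, and similarly for $BR$, $WR$, $BW$. Re-balancing the dataset then amounts to multiplying the $k$-th class's contributions by a positive factor $\mu_k$. In analogy with the definition used in the previous proof, a measure is data invariant if its scaled version admits a factorization
\[
f\bigl(\mu_1 x^{(1)},\dots,\mu_K x^{(K)}\bigr)=C(\mu_1,\dots,\mu_K)\,f\bigl(x^{(1)},\dots,x^{(K)}\bigr),
\]
so that changes in class proportions enter only through an overall multiplicative constant that does not depend on the underlying counts.

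Next I would work through the comparative deviation $\sigma_c$ as the prototype. Substituting the per-class scaled counts gives
\[
\sigma_c=\frac{\sum_k \mu_k RW^{(k)}-\sum_k \mu_k WR^{(k)}}{\sum_k \mu_k RW^{(k)}+\sum_k \mu_k WR^{(k)}},
\]
and applying the logarithm trick together with the substitution $F(\ln x)=\sigma_c(x)$ from the scale-invariance proof, the $\mu$-dependence peels off in the same way as before. Inverting the logarithm delivers $\sigma_c(\mu\cdot x)=C(\mu)\,\sigma_c(x)$, matching the required form.

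For the remaining measures $\alpha$, $\xi_c$, $\xi_e$ and $\phi_e$, each is a ratio of affine combinations of the four prayatul entries, so the same per-class decomposition and logarithm/exponential manipulation carries through with only cosmetic bookkeeping changes. I would therefore indicate the reduction explicitly for $\sigma_c$ and state that the other four follow mutatis mutandis, exactly as the paper does at the end of Theorem~\ref{th:sinvariant}.

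The main obstacle I anticipate is handling the measures whose denominators couple all four prayatul entries (namely $\alpha$ and $\phi_e$): one has to verify that every contribution is scaled by the same per-class factor $\mu_k$, so that the factor $C$ genuinely depends only on the imbalance factors $\mu_1,\dots,\mu_K$ and not on the specific per-class counts. Establishing this consistent per-class scaling across $BR$, $RW$, $WR$ and $BW$ simultaneously is the key technical point that makes the factorization go through and therefore yields data invariance for all five measures.
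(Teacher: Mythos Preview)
Your proposal takes a very different route from the paper and, more importantly, that route does not go through. The paper's own argument is a one-line observation: the prayatul entries $BR$, $RW$, $WR$, $BW$ are aggregate counts over all test instances, computed without any reference to class labels or class sizes; since each measure is a function solely of these four class-agnostic numbers, class imbalance never enters the formula. No per-class decomposition, no logarithm substitution, and no factorization is invoked.

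The factorization you aim for is where your plan breaks. Already for $K=2$ classes and $\sigma_c$, writing $RW=RW^{(1)}+RW^{(2)}$, $WR=WR^{(1)}+WR^{(2)}$ and scaling the classes by $\mu_1\neq\mu_2$ gives
\[
\frac{\mu_1 RW^{(1)}+\mu_2 RW^{(2)}-\mu_1 WR^{(1)}-\mu_2 WR^{(2)}}{\mu_1 RW^{(1)}+\mu_2 RW^{(2)}+\mu_1 WR^{(1)}+\mu_2 WR^{(2)}},
\]
which is \emph{not} of the form $C(\mu_1,\mu_2)\,\sigma_c(x)$ in general: take $RW^{(1)}=WR^{(2)}=1$ and $RW^{(2)}=WR^{(1)}=0$; the unscaled $\sigma_c$ equals $0$, while the rescaled value equals $(\mu_1-\mu_2)/(\mu_1+\mu_2)\neq 0$. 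Thus the ``logarithm trick'' cannot peel off a $\mu$-only constant, and the obstacle you flag in your final paragraph is fatal rather than merely technical. It is also worth noting that the manipulation you borrow from Theorem~\ref{th:sinvariant} already uses identities such as $\ln(a+b)=\ln a+\ln b$ and $\ln(a-b)=\ln a-\ln b$ that do not hold, so even on its own terms the mirrored argument would inherit those defects.
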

\begin{proof}
Since the measures do not directly depend on the number of samples in each class or sequence in which samples are considered, balance or unbalance dataset, it does not have any direct impact on the measures. Moreover, measure considers the elements of prayatul matrix, which are counts of same-wise comparative outcomes of two algorithms. Therefore, even if the dataset is unbalanced, it will not have any impact on the measures. For instance, the measure $\sigma_c$  has two elements RW and WR. The element RW will be influenced only when the outcomes of primary algorithm is right and alternative algorithm is wrong for the same instances of the dataset it does not matter which the instances belong or even all the instances may belong to single class. Likewise, element WR will be influenced only when the outcomes of primary algorithm is wrong and alternative algorithm is right for the same instances of the dataset. Same is the case for the elements BR and BW. Therefore, all five measures $\sigma_c$, $\alpha, \xi_c, \xi_e$ and $\phi_e$ are data invariant.
\end{proof}

\begin{theorem}
\label{th:monotonic}
    All five measures $\sigma_c, \alpha, \xi_c, \xi_e$ and $\phi_e$ are monotonic.
\end{theorem}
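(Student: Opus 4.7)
The plan is to first make the axiom precise: a \emph{monotonic consistent improvement} for the primary algorithm should mean any change to the prayatul matrix entries in which BR or RW (outcomes favourable to $A_p$) is weakly increased and/or WR or BW (outcomes unfavourable to $A_p$) is weakly decreased, with at least one strict change. The claim then reduces to showing that each of the five measures is a non-decreasing function of BR and RW and a non-increasing function of WR and BW on its domain.

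To verify this, I would treat $BR, RW, WR, BW$ as non-negative real arguments and compute partial derivatives; every measure has the form $u/v$ with $u$ and $v$ affine in the four entries, so the sign of $\partial/\partial x$ coincides with the sign of $u_x v - u v_x$ and the positive denominator $v^2$ is irrelevant to the sign. Concretely, for $\sigma_c$ one obtains
\begin{equation*}
\frac{\partial \sigma_c}{\partial RW} = \frac{2\,WR}{(RW+WR)^2} \ge 0,\qquad \frac{\partial \sigma_c}{\partial WR} = \frac{-2\,RW}{(RW+WR)^2} \le 0.
\end{equation*}
For $\xi_c$, both $\partial/\partial BR$ and $\partial/\partial RW$ reduce to $WR/(BR+RW+WR)^2 \ge 0$, while $\partial/\partial WR = -(BR+RW)/(BR+RW+WR)^2 \le 0$. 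The analogous computations for $\xi_e$, $\alpha$ and $\phi_e$ follow the same template and yield non-negative partials with respect to BR and RW and non-positive partials with respect to WR and BW.

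Because any discrete consistent improvement is a finite composition of unit changes in the four coordinates each of the allowed sign, non-negativity of the appropriate directional derivatives along the continuous segment connecting the old and new matrices implies weak monotonicity for the actual integer-valued updates, giving the theorem. The main obstacle I anticipate is the informal phrasing of the axiom itself, so I would pin down the formal statement at the outset. Two minor caveats to flag rather than argue: the denominators must be strictly positive, so a standing non-degeneracy assumption (at least one relevant cell is non-zero) is needed for the derivatives to be defined; and for $\xi_c$ and $\xi_e$ the entry BW does not appear in the formula at all, so $\partial/\partial BW = 0$ and monotonicity in that coordinate holds trivially.
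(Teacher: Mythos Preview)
Your derivative approach is more rigorous than the paper's informal argument, but the sentence ``the analogous computations for $\xi_e$, $\alpha$ and $\phi_e$ follow the same template and yield non-negative partials with respect to BR and RW and non-positive partials with respect to WR and BW'' is not correct, and this is a genuine gap. For $\alpha=(BR+RW-BW)/(BR+RW+WR+BW)$ one gets
\[
\frac{\partial \alpha}{\partial WR}=\frac{-(BR+RW-BW)}{(BR+RW+WR+BW)^2},
\]
which is \emph{positive} whenever $BW>BR+RW$. Concretely, take $BR=RW=0$, $BW=1$, $WR=1$: then $\alpha=-1/2$, and increasing $WR$ to $2$ gives $\alpha=-1/3>-1/2$. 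Likewise for $\phi_e=(BR+RW-WR)/(BR+RW+WR+BW)$,
\[
\frac{\partial \phi_e}{\partial BW}=\frac{-(BR+RW-WR)}{(BR+RW+WR+BW)^2},
\]
which is positive whenever $WR>BR+RW$; with $BR=RW=0$, $WR=1$, $BW=1$ one has $\phi_e=-1/2$, and increasing $BW$ to $2$ yields $\phi_e=-1/3$. So under your formalization of ``consistent improvement'' (independent weak increase of BR, RW and weak decrease of WR, BW), monotonicity of $\alpha$ and $\phi_e$ actually fails.

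The paper's own proof is verbal and, for $\alpha$, mentions only changes in BR and BW, so it never confronts the $WR$ direction; your attempt to make the axiom precise is exactly what exposes the problem. One way to rescue both the statement and your argument is to sharpen the formalization so that the total count $N=BR+RW+WR+BW$ stays fixed and an ``improvement'' of the primary means reclassifying an instance from wrong to right while the alternative's outcome on that instance is unchanged, i.e.\ a unit move $WR\to BR$ or $BW\to RW$. Under either coupled move the numerators of $\alpha$ and $\phi_e$ strictly increase while the denominators are unchanged, and your derivative computations for $\sigma_c$, $\xi_c$, $\xi_e$ already cover the remaining cases; with that refined axiom the partial-derivative strategy goes through cleanly.
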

\begin{proof}
Consistent improvement of $\sigma_c$, $\alpha, \xi_c, \xi_e$ and $\phi_e$  in the context primary algorithm means non-decreasing changes in these measures. Consistent improvement can happen either when the primary algorithm is right or alternative algorithm is wrong in majority of the instances i.e. the prayatul matrix elements which involves abstraction $Right$ for primary is more or abstraction $Wrong$ for alternative is more. Likewise, consistent improvement can also happen when the prayatul matrix elements which involves abstraction $Wrong$ for primary is less or abstraction $Right$ for alternative is less. 

Now, considering the measure $\sigma_c$, where numerator is sum of a positive RW and negative WR. Thus, increment of  RW or decrement of WR implies consistent improvement of $\sigma_c$ and it will have non-decreasing values. Similarly, increment of BR or decrement of BW implies consistent improvement of $\alpha$ and the values will be non-decreasing as well. In the same way,  consistent improvement of $\xi_c, \xi_e$ and $\phi_e$ will happen if  BR and/or RW increases or  if WR and/or BW decreases. Therefore, $\sigma_c$, $\alpha, \xi_c, \xi_e$ and $\phi_e$, all are non-decreasing under monotonic consistent improvement.
\end{proof}

\begin{theorem}
\label{th:continuous}
    All five measures $\sigma_c, \alpha, \xi_c, \xi_e$ and $\phi_e$ are continuous.
\end{theorem}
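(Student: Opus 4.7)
The plan is to treat each of the five measures as a rational function of the four counts $BR, RW, WR, BW$ (regarded as real non-negative variables) and invoke the standard fact that a rational function is continuous wherever its denominator does not vanish.

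First I would rewrite each measure explicitly as a quotient $f = P/Q$, where $P$ and $Q$ are linear combinations of $BR, RW, WR, BW$ with coefficients in $\{-1, 0, +1\}$. Since polynomials are continuous and the quotient of two continuous functions is continuous on the open set where the denominator is non-zero, the only real work is in identifying this set. Concretely, I would inspect the denominators in turn: $\sigma_c$ has denominator $RW+WR$; the rightness measures $\xi_c$ and $\xi_e$ share the denominator $BR+RW+WR$; and $\alpha$ and $\phi_e$ share the denominator $BR+RW+WR+BW=N$. The last is positive as soon as the test set is non-empty, and the others fail only in genuinely degenerate regimes (total agreement of the two algorithms for $\sigma_c$, or both algorithms wrong on every instance for $\xi_c, \xi_e$). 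Restricted to their natural domains, all five measures are thus rational functions with strictly positive denominators, hence continuous.

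Finally I would translate this back into the statement of the axiom. A change of $\Delta N$ in the sample size can alter each of $BR, RW, WR, BW$ by at most $|\Delta N|$, and continuity of the rational function then guarantees that the induced change in the measure is proportionally small; in fact, on any compact subset of its domain the measure is locally Lipschitz, which is even stronger than bare continuity and accords well with the informal statement that a small change in $N$ has a smaller impact on the metric. The main obstacle, and it is a minor one, is the careful delimitation of the boundary where denominators vanish: for each affected measure I would state explicitly the non-degeneracy hypothesis (for example, for $\sigma_c$: the two algorithms disagree on at least one instance) under which the continuity claim is well-posed, and leave the measure undefined elsewhere. Beyond this bookkeeping, the result is a direct appeal to the continuity of rational functions.
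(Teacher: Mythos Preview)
Your argument is correct but proceeds along a genuinely different line from the paper. The paper establishes continuity by an explicit perturbation analysis: it adds a single instance to the test set, observes that exactly one of $BR,RW,WR,BW$ increments by $1$, writes out the perturbed quotient (e.g.\ $(a\pm 1)/(b+1)$ for $\sigma_c$ with $a=RW-WR$, $b=RW+WR$), and then argues informally that when $a\gg 1$ and $b\gg 1$ this is approximately $a/b$. Your route instead recognises each measure as a rational function on $\mathbb{R}^4_{\ge 0}$ and appeals directly to the standard continuity of such functions away from the zero set of the denominator, together with an explicit enumeration of the degenerate boundary cases. What your approach buys is rigour and uniformity: it treats all five measures at once, does not rely on the large-count hypothesis $a,b\gg 1$, and makes the non-degeneracy conditions (e.g.\ $RW+WR>0$ for $\sigma_c$) explicit rather than implicit. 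What the paper's approach buys is a more literal fit with the stated axiom (``small change in $\#$samples $N$ causes a smaller impact''), since it works directly with an integer increment of the sample size rather than passing through a real-variable relaxation; your final paragraph recovers this connection, so nothing is lost.
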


\begin{proof}
For continuity of a measure, small changes in number of sample has to cause a smaller impact on it. Let us consider a new instance is being added to the test dataset i.e. number of samples has increased by just 1. Now, let us examine the impact of a single instance on each of the elements of the prayatul matrix and on all five measures. Clearly, by the definition of prayatul matrix, only one of the values among BR, RW, WR and BW will be increased by 1 for the newly added instance, while other three will remain same.

Considering, the measure $\sigma_c$ which involves only two elements of the prayatul matrix i.e. RW and WR. Thus, there has the possibility that both RW and WR may remain unchanged for the newly added instance. If any one of RW and WR increase by 1, we will have 

\begin{align*}
    \sigma_c (P,Q)=\frac{RW-WR-1}{RW+WR+1}
\end{align*}

\begin{equation}
    \sigma_c (P,Q)=
\begin{cases}
 \frac{RW-WR+1}{RW+WR+1} & \text{if RW increases} \\
 \\
 \frac{RW-WR-1}{RW+WR+1} & \text{if WR increases}
\end{cases}
\end{equation}

Replacing RW-WR and RW+WR by $a$ and $b$ the above becomes

\begin{equation}
\sigma_c (P,Q)=
\begin{cases}
 \frac{a-1}{b+1} & \text{if RW increases} \\
 \\
 \frac{a-1}{b+1} & \text{if WR increases}
\end{cases}
\end{equation}

By Theorem~\ref{th:range} we have $a\le b$ always, which implies $a+1\le b+1$ as well as $a-1< b+1$. If $a>>1$ and $b>>1$ then by properties of ratio,
\begin{equation}
\left\{
\begin{aligned}
     &\frac{a+1}{b+1}	\equiv \frac{a}{b} \\     
    &\frac{a-1}{b+1}	\equiv \frac{a}{b}
\end{aligned}\right.
 \end{equation}  

Same is true for additional $k$ number of instances if $a>>k$ and $b>>k$. Thus, small change in number of samples $\sigma_c$ will have minimal so it is continuous. Similarly, the other measures $\alpha, \xi_c, \xi_e$ and $\phi_e$  can also be proven as continuous.
\end{proof}


\subsection{Prayatul Matrix Generation}
Prayatul matrix generation process is quite simple and straight forward. A simple algorithm called \emph{$\mathcal{D}$-Matrix Algorithm} is designed for generating prayatul matrix as shown in the Algorithm~\ref{DmatrixAlgo}. The algorithm takes three inputs: ground truth $G$, outcome $P$ of the primary algorithm and outcome $Q$ of the alternative algorithm. The entries of prayatul matrix is computed based on the abstraction levels as specified above and finally the algorithm returns the prayatul matrix $D$. The time complexity of the algorithm is $\mathcal{O}(N)$, where $N$ is the number instances.  

\begin{algorithm}
\caption{$\mathcal{D}$-Matrix Algorithm}\label{DmatrixAlgo}
\DontPrintSemicolon
\KwIn{$G, P, Q$}
\KwOut{ Prayatul Matrix $(\mathcal{D})$}
\SetKwBlock{Begin}{procedure}{end procedure}
\Begin($\text{generatePrayatulMatrix} {(} G, P, Q{)}$)
{
  $\mathcal{D}_{ij} \leftarrow 0, \forall i,j \in [1,2]$\;
  $N \leftarrow $number of instances in $G$\;
   \For{$i=1$ to $N$}
  {
    \lIf{$p_i=g_i$ AND $q_i=g_i$}
    {
        $\mathcal{D}_{11} \leftarrow \mathcal{D}_{11} + 1$
    }
    \lElseIf{$p_i=g_i$ AND $q_i\neq g_i$}
    {
        $\mathcal{D}_{12} \leftarrow \mathcal{D}_{12} + 1$
    }
    \lElseIf{$p_i\neq g_i$ AND $q_i = g_i$}
    {
        $\mathcal{D}_{21} \leftarrow \mathcal{D}_{21} + 1$
    }
    \lElseIf{$p_i\neq g_i$ AND $q_i\neq g_i$}
    {
      $\mathcal{D}_{22} \leftarrow \mathcal{D}_{22} + 1$
    }  
  }\label{endfor}  
  \Return{$\mathcal{D}$}
}
\end{algorithm}

\section{Experimental Analysis}
\label{sec:expAnalysis}
\subsection{Experimental Setup}
\subsubsection{Datasets}
To analyze efficacy of the proposed prayatul matrix based method as well as performance measures, datasets for classification problem are considered from Scikit-learn package~\cite{pedregosa2011scikit}. Three datasets considered for classification are Make Moons (MM), Make Circles (MC), and Linearly Separable (LS). All of the three classification datasets contain 40 test instances. Make Moons dataset is generated with parameter values noise=0.3 and random\_state=0. Make Circles dataset is generated with parameter values noise=0.2, factor=0.5 and random\_state=1. However, Linearly Separable dataset is generated using Make Classification base dataset with parameter values n\_features=2, n\_redundant=0, n\_informative=2, random\_state=1, and n\_clusters\_per\_class=1.



Four large-scale image datasets having hundreds of classes and containing huge numbers of instances are considered for analysis on deep learning models detailed as follows:
\begin{description}
    \item[\textbf{MNIST:}] \hspace*{5pt}The widely used MNIST~\cite{deng2012mnist} database (Modified National Institute of Standards and Technology database) of handwritten digits contains 70,000 samples and 10 classes.
    \item[\textbf{CIFAR10:}]  \hspace*{10pt}The CIFAR10 dataset~\cite{krizhevsky2009learning} contains of 60,000 samples of 32x32 colour images having 10 classes, with 6,000 images per class.
    \item[\textbf{cBirds:}] The caltech\_birds2010 (cBirds)  dataset~\cite{WelinderEtal2010} is an image dataset with photos of 200 bird species containing total number of categories of birds is 200 and there are 6,033 images in the 2010 dataset.
    \item[\textbf{eMNIST:}] \hspace*{7pt} The extended MNIST (eMNIST) dataset~\cite{cohen_afshar_tapson_schaik_2017} is a collection of handwritten digits and characters. There are a total of 62 classes and are unbalanced by class~\cite{pavan2021multi} having total 8,14,255 instances.   
\end{description}

\begin{figure*}[!h]
    \centering
    \includegraphics[width=1.5\columnwidth]{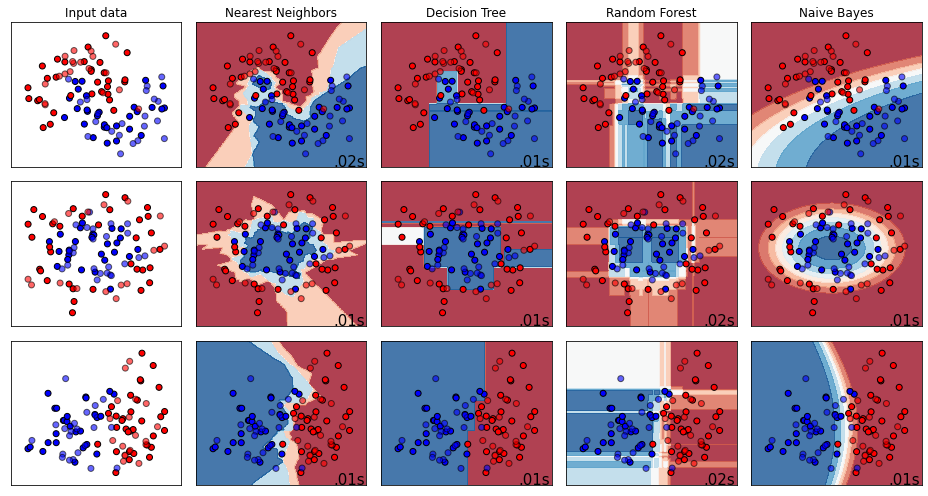}
    \caption{Results with classification algorithms}
    \label{fig:classResult}
\end{figure*}

\begin{table*}
    \caption{Proposed measures obtained for Nearest Neighbor in comparison to three other classifiers}
    \vspace{0.15cm}
    \label{tab:compMeasureClass}
    \centering
    
    \noindent\begin{tabular}{|@{}c|@{}c|@{}c|@{}c|@{}c|@{}c|@{}c|@{}c|@{}c|@{}c|@{}c|@{}c|@{}c|@{}c|@{}c|@{}c|}
    \hline
     Algos& \multicolumn{5}{c|}{Decision Tree}& \multicolumn{5}{c|}{Random Forest}& \multicolumn{5}{c|}{Naive Bayes}\\
       \hline
        Data&$\sigma_c$&$\alpha$&$\xi_c$&$\xi_e$&$\phi_c$&$\sigma_c$&$\alpha$&$\xi_c$&$\xi_e$&$\phi_c$&$\sigma_c$&$\alpha$&$\xi_c$&$\xi_e$&$\phi_c$\\
     \hline
        MM &1&0.95&1&1&0.98&1&0.95&1&1&0.98&1&0.95&1&1&0.98\\

        MC &0.56&0.9&0.95&0.9&0.88&0.5&0.9&0.95&0.9&0.88&0.69&0.9&0.95&0.89&0.88\\

        LS &-1&0.875&0.97&0.95&0.9&0&0.875&0.97&0.95&0.9&-1&0.875&0.97&0.95&0.9\\

     \hline    
    \end{tabular}

\end{table*}

\begin{table*}
    \caption{Accuracy, Precision and Recall based on confusion matrix for classification algorithms}
    \vspace{0.15cm}
    \label{tab:classNMeasure}
    \centering
    \begin{tabular}{|c|c|c|c|c|c|c|c|c|c|c|c|c|}
    \hline
     Algos& \multicolumn{3}{c|}{Nearest Neighbor}& \multicolumn{3}{c|}{Decision Tree}& \multicolumn{3}{c|}{Random Forest}& \multicolumn{3}{c|}{Naive Bayes}\\
       \hline
        Data&$Acc$&$Pre$&$Rec$&$Acc$&$Pre$&$Rec$&$Acc$&$Pre$&$Rec$&$Acc$&$Pre$&$Rec$\\
     \hline
        MM &0.98&0.95&1&0.95&0.91&1&0.93&0.95&0.9&0.88&0.9&0.86 \\
        MC &0.93&0.96&0.92&0.78&0.86&0.75&0.9&0.95&0.88&0.7&1&0.5 \\
        LS &0.93&0.95&0.91&0.95&0.95&0.95&0.93&1&0.86&0.95&1&0.91 \\
     \hline    
    \end{tabular}

\end{table*}

\subsubsection{ML/DL Models}

Supervised machine learning algorithms are considered from Scikit-learn package~\cite{pedregosa2011scikit} to analyse the efficacy of proposed evaluation method. Four widely used classification algorithms namely K-Nearest Neighbor (Nearest Neighbor), Decision Tree, Random Forest and Naive Bayes are considered. Specific parameters related to  classification algorithms are set as follows. For Nearest Neighbor $k$=3, for Decision Tree max\_depth=5, and for Random Forest max\_depth=5, n\_estimators=10, max\_features=1 are considered.
Further more, four widely used deep learning models are considered, which include ResNet50V2~\cite{he2016identity}, MobileNetV2~\cite{sandler2018mobilenetv2}, EfficientNet~\cite{tan2019efficientnet},  and  XceptionNet~\cite{chollet2017xception}.

\subsubsection{Implementation Details and System Configuration}
All implementations and executions are done under Jupiter Notebook server 6.4.10 environment with Python 3.10.4. The D-Matrix Algorithm for generating prayatul matrix and proposed performances measures are implemented in Python language ~\footnote{ Source codes of prayatul matrix and five scores are released through GitHub under GPLv3 License https://github.com/anupambis/Prayatul-for-classification
}. As mentioned above, widely used classification algorithms that are already implemented and openly available in Scikit-learn package~\cite{pedregosa2011scikit} are considered. Publicly available source code for  classification~\cite{classifierScikit} is considered as reference to setup the experimental environment. Since number of samples $n$ are less in case of datasets used for classification algorithms, leave-one-out cross-validation is performed by repeating the process $n$ times. However for large-scale image datasets used in deep learning models, 10-fold cross validation is performed and reported mean scores. All the experiments are done on the Computer having Intel(R) Core(TM) i7-8565U CPU @ 1.80GHz with 8  Cores, 	4.6GHz Speed, NVIDIA GeForce MX130 Graphics card, 16 GB RAM, 1TB HDD and 64-bit (AMD) Windows 10  Operating System. However, deep learning related experiments were done on Google Colab Pro+.

\subsection{Result Analysis}

The proposed direct comparison measures obtained with classification and deep learning models are analyzed from the perspective measure values as well as instance level comparison outcomes though prayatul matrix. Proposed measures are compared with the indications of confusion matrix based measures and then re-verified with instance level comparison entries in prayatul matrix.  

\subsubsection{Measure Value-based Analysis}
 Since, the proposed direct comparison approach pairs two algorithms for computing performance measure values, the interpretation of these measures are also to be done pairwise. The measure values presented in Table~\ref{tab:compMeasureClass} indicate that the performance of Nearest Neighbor algorithm is better in comparison to other three classification algorithms as mostly all measure values are positive.  Similar indication is given by the accuracy, precision and recall values as shown in Table~\ref{tab:classNMeasure}. However, these values certainly cannot tell us that the Nearest Neighbor is incapable of taking right decisions on certain instances of Linearly Separable dataset, while Decision Tree and Naive Bayes are capable of taking right decisions on those cases as indicated by negative comparative deviation values. Likewise, on both Make Moons and Make Circles datasets also the low comparative deviation values indicate that Nearest Neighbor is incapable of taking right decision on certain instances despite its superiority, while others can do.  The high polarization values indicate that Nearest Neighbor is polarized towards right decision in comparison to Decision Tree, Random Forest and Naive Bayes. The other three measures i.e. comparative rightness, effective rightness and effective superiority are also indicating superiority of Nearest Neighbor over Decision Tree, Random Forest and Naive Bayes on all the datasets. All these indications are clearly visible in the results presented in Fig.~\ref{fig:classResult}.

\begin{figure*}
     \centering
     \begin{subfigure}[b]{0.49\textwidth}
         \centering
          \begin{tikzpicture}
 
\begin{axis}  
[  
    width=\textwidth,
    height=1.15\textwidth,
    xbar, 
     enlargelimits=0.25,
     legend style={at={(1.12,1.05)}, 
    enlarge y limits=0.15, rotate=90,
    enlarge x limits=0.025, 
      anchor=south, legend columns=-1},    
    clip=false,    
    xlabel={GPU Time (minutes)}, 
    symbolic y coords={MNIST, CIFAR10, cBirds, eMNIST},  
    ytick=data,  
    xmin=0,
    xmax=1000,
    y tick label style={rotate=90,anchor=south},
    nodes near coords,  
    nodes near coords align={vertical},   
    nodes near coords style={anchor=west,inner ysep=1pt}
    ]  
   \addplot [draw = black,
    semithick,
    pattern = crosshatch,
    pattern color = red
] coordinates {(39.49,MNIST) (30.78,CIFAR10) (64,cBirds) (780.56,eMNIST)};

    \addplot [draw = black,
    semithick,
    pattern = crosshatch,
    pattern color = blue
] coordinates {(16.63,MNIST) (15.81,CIFAR10) (33.6,cBirds) (441.81,eMNIST)};  
    
    \addplot [draw = black,
    semithick,
    pattern = crosshatch,
    pattern color = green
] coordinates {(27.98,MNIST) (24.41,CIFAR10) (51.55,cBirds) (749.62,eMNIST)};  
    
    \addplot [draw = black,
    semithick,
    pattern = crosshatch,
    pattern color = brown
] coordinates {(31.61,MNIST) (30.14,CIFAR10) (87.25,cBirds) (665.51,eMNIST)}; 

\legend{ResNet, MobileNet, EfficientNet, XceptionNet}  
 
\end{axis}
 
\end{tikzpicture}
         \caption{Training Time}
         \label{fig:y equals x}
     \end{subfigure}    
     \begin{subfigure}[b]{0.49\textwidth}
         \centering
         \begin{tikzpicture}
 
\begin{axis}  
[  
    width=\textwidth,
    height=1.15\textwidth,
    xbar, 
     enlargelimits=0.25,
    legend style={at={(.77,0.27)}, 
    enlarge y limits=0.15,
    enlarge x limits=0.025, 
      anchor=north},    
    clip=false,    
    xlabel={Accuracy}, 
    symbolic y coords={MNIST, CIFAR10, cBirds, eMNIST},  
    ytick=data,  
    xmin=0,
    xmax=1,
    y tick label style={rotate=90,anchor=south},
    ]  
   \addplot [draw = black,
    semithick,
    pattern = crosshatch,
    pattern color = red
] coordinates {(0.9923,MNIST) (0.7200,CIFAR10) (0.9977,cBirds) (0.9852,eMNIST)};

    \addplot [draw = black,
    semithick,
    pattern = crosshatch,
    pattern color = blue
] coordinates {(0.9923,MNIST) (0.6860,CIFAR10) (0.9973,cBirds) (0.9036,eMNIST)};  
    
    \addplot [draw = black,
    semithick,
    pattern = crosshatch,
    pattern color = green
] coordinates {(0.9901,MNIST) (0.6317,CIFAR10) (0.9723,cBirds) (0.9076,eMNIST)};  
    
    \addplot [draw = black,
    semithick,
    pattern = crosshatch,
    pattern color = brown
] coordinates {(0.9933,MNIST) (0.7622,CIFAR10) (0.9980,cBirds) (0.9852,eMNIST)}; 

 
\end{axis}

\end{tikzpicture}
         \caption{Training Accuracy}
         \label{fig:three sin x}
     \end{subfigure}
        \caption{Training Accuracy and GPU Time (in min) required for training deep learning models}
        \label{fig:DLTrainTime}
\end{figure*}
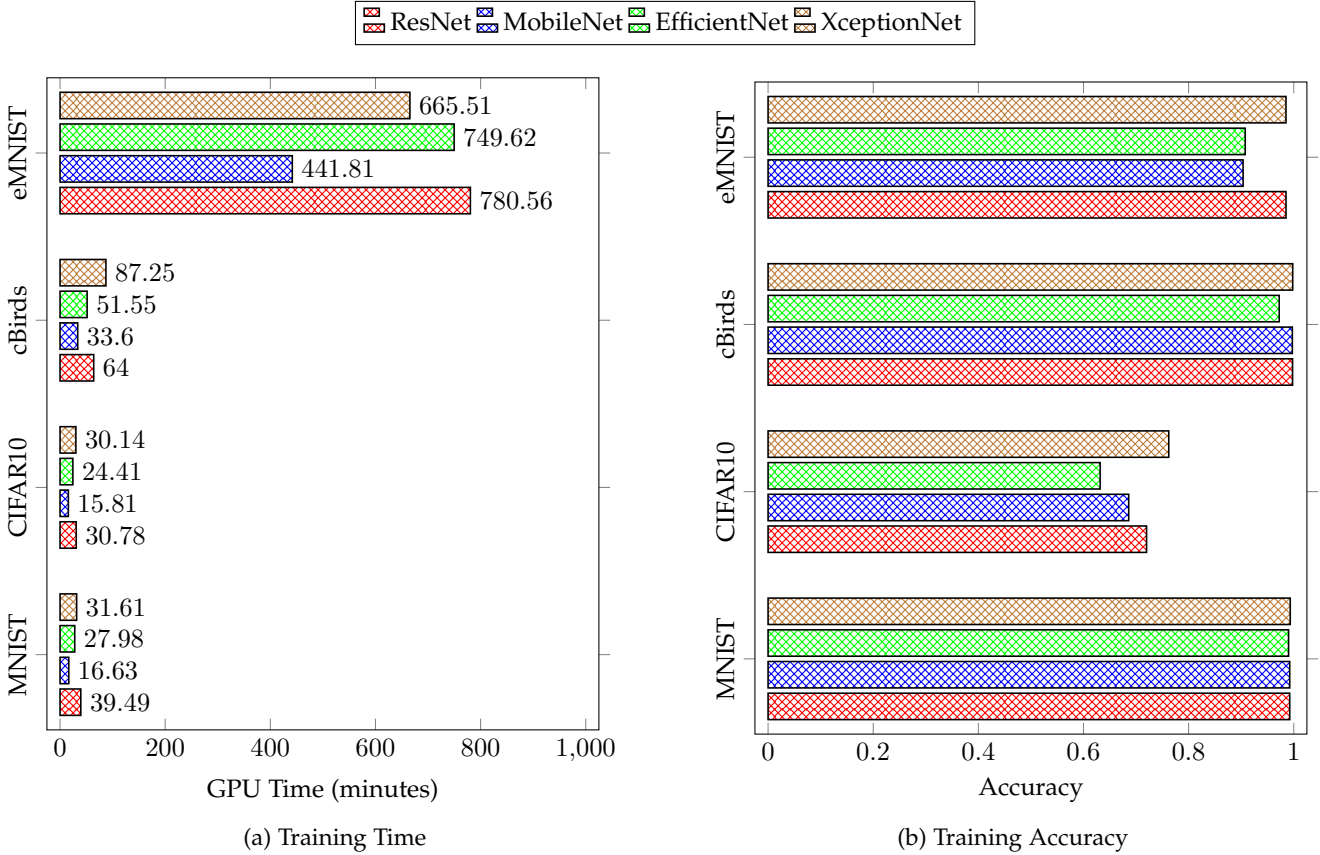

\begin{table*}[t]
    \caption{Proposed measures obtained for ResNet in comparison to three other deep learning models}
    \vspace{0.15cm}
    \label{tab:compMeasureDL}
    \centering
    \resizebox{\textwidth}{!}{  
    \noindent\begin{tabular}{|c|c|c|c|c|c|c|c|c|c|c|c|c|c|c|c|}
    \hline
     Models& \multicolumn{5}{c|}{MobileNet}& \multicolumn{5}{c|}{EfficientNet}& \multicolumn{5}{c|}{XceptionNet}\\
       \hline
        Data&$\sigma_c$&$\alpha$&$\xi_c$&$\xi_e$&$\phi_c$&$\sigma_c$&$\alpha$&$\xi_c$&$\xi_e$&$\phi_c$&$\sigma_c$&$\alpha$&$\xi_c$&$\xi_e$&$\phi_c$\\
     \hline
 MNIST&-0.0299&0.9910&0.9983&0.9966&0.9957&-0.0259&0.9899&0.9986&0.9972&0.9960&-0.2510&0.9904&0.9983&0.9967&0.9958\\
        CIFAR10&0.1712&0.5268&0.9380&0.8761&0.8295&-0.2562&0.6013&0.9384&0.8767&0.8296&-0.4892&0.5520&0.9263&0.8527&0.8179\\

        cBirds&-0.0288&-0.7982&0.4856&-0.0288&0.0012&0.1250&-0.7755&0.6792&0.4583&0.00381&-0.0224&-0.6904&0.5814&0.2628&0.0027\\   
        eMNIST& -0.0757&0.7539&0.9300&0.8601&0.7816&-0.2008&0.7543&0.9297&0.8593&0.7812&-0.0241&0.7521&0.9318&0.8637&0.7833\\ 
     \hline    
    \end{tabular}
}
\end{table*}

\begin{table*}[t]
    \caption{Accuracy, Precision and Recall based on confusion matrix for deep learning models}
    \vspace{0.15cm}
    \label{tab:confMeasureDL}
    \centering
    \resizebox{\textwidth}{!}{  
    \begin{tabular}{|c|c|c|c|c|c|c|c|c|c|c|c|c|}
    \hline
     Models& \multicolumn{3}{c|}{ResNet}& \multicolumn{3}{c|}{MobileNet}& \multicolumn{3}{c|}{EfficientNet}& \multicolumn{3}{c|}{XceptionNet}\\
       \hline
        Data&$Acc$&$Pre$&$Rec$&$Acc$&$Pre$&$Rec$&$Acc$&$Pre$&$Rec$&$Acc$&$Pre$&$Rec$\\
     \hline
        MNIST&0.9974&0.9974&0.9974&0.9974&0.9974&0.9974&0.9965&0.9965&0.9965&0.9977&0.9977&0.9977\\
        CIFAR10&0.8839&0.8876&0.8839&0.8657&0.8667&0.8657&0.8715&0.8713&0.8715&0.9149&0.9182&0.9149\\
        cBirds&0.0055&0.0055&0.0058&0.0043&0.0043&0.0049& 0.0038&0.0038&0.0037&0.0049&0.00497&0.0049\\
        eMNIST&0.8452&0.7526&0.7382&0.8541&0.7680&0.7489&0.8665&0.7789&0.7567&0.8481&0.7481&0.7459\\
        
     \hline    
    \end{tabular}
}
\end{table*}


The proposed measure values obtained for deep learning models on four large scale image datasets are presented in Table~\ref{tab:compMeasureDL}. Clearly, ResNet deviates from the decisions of MobileNet, EfficientNet and XceptionNet and those decisions are wrong as indicated by negative comparative deviation values in all four datasets.  The confusion matrix based measures presented in Table~\ref{tab:confMeasureDL} certainly cannot tell that ResNet takes wrong decision on certain instances while other takes right decision on those instances. Nevertheless, the comparative rightness, effective rightness and effective superiority values indicate that ResNet as the best performing model in almost all four dataset and  confusion matrix based measures also indicate the same. Even on highly imbalanced dataset eMNIST also  ResNet seems to be performing comparatively better than other three models as indicated comparative rightness, effective rightness and effective superiority. While confusion matrix based measures indicate all four models has almost same performance on eMNIST.  Surprisingly, all four models seems to perform worst in cBirds dataset, though training accuracy is high for all models as shown in Fig.~\ref{fig:DLTrainTime}. High negative polarity values indicate that ResNet and other three comparing models are highly polarized towards wrong decision. Though, confusion matrix based measures indicate that all models are highly inaccurate and ResNet is comparatively better but cannot tell that if wrong decisions of ResNet taken into account and MobileNet is actually better than ResNet in cBirds dataset as indicated by negative effective rightness value.

\begin{figure*}[t]
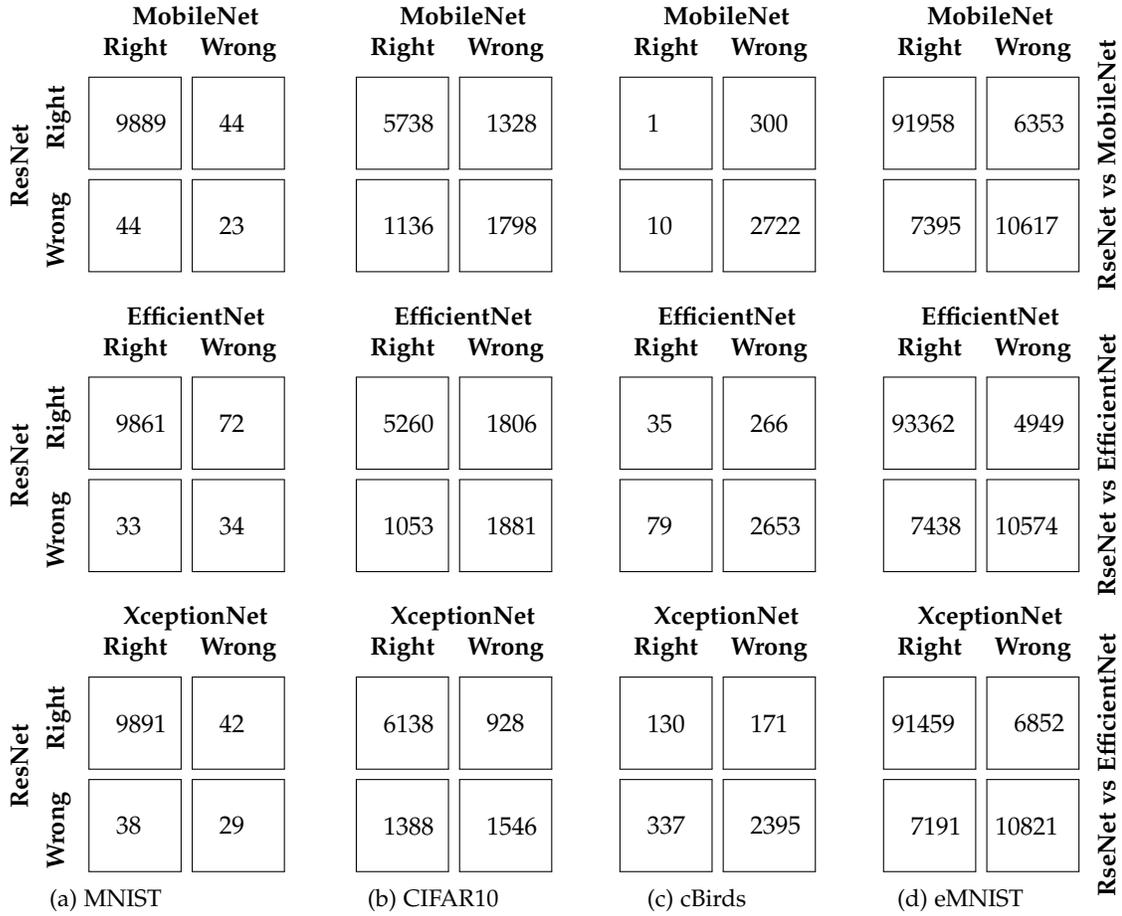

     \centering
     \begin{subfigure}[b]{0.3\columnwidth}
         \centering
   
 \noindent
\setlength\tabcolsep{0pt}
\begin{tabular}{c >{\bfseries \hspace{0.7em}}c @{\hspace{0.7em}}c @{\hspace{0.4em}}c @{\hspace{0.7em}}l}
   & & \multicolumn{2}{c}{\bfseries MobileNet }  \\
  & & \bfseries Right & \bfseries Wrong \\[0.5em]
  \multirow{4}{*}{\rotatebox{90}{\parbox{2.2cm}{\bfseries \centering ResNet }}}& \multirow{4}{*}{\rotatebox{90}{\parbox{2.5cm}{\bfseries \centering Wrong \hspace{10pt}   Right }}} & \MySCBox{9889} & \MySCBox{44} \\[2.6em]
  &  & \MySCBox{44} & \MySCBox{23}\\
  \\
  & & \multicolumn{2}{c}{\bfseries EfficientNet }  \\
  & & \bfseries Right & \bfseries Wrong \\[0.5em]
  \multirow{4}{*}{\rotatebox{90}{\parbox{2.2cm}{\bfseries \centering ResNet }}}& \multirow{4}{*}{\rotatebox{90}{\parbox{2.5cm}{\bfseries \centering Wrong \hspace{10pt}   Right }}} & \MySCBox{9861} & \MySCBox{72} \\[2.6em]
  &  & \MySCBox{33} & \MySCBox{34}\\
  \\
  & & \multicolumn{2}{c}{\bfseries XceptionNet }  \\
  & & \bfseries Right& \bfseries Wrong \\[0.5em]
  \multirow{4}{*}{\rotatebox{90}{\parbox{2.2cm}{\bfseries \centering ResNet }}}& \multirow{4}{*}{\rotatebox{90}{\parbox{2.5cm}{\bfseries \centering Wrong \hspace{10pt}   Right }}} & \MySCBox{9891} & \MySCBox{42} \\[2.6em]
  &  & \MySCBox{38} & \MySCBox{29}
\end{tabular}
    \caption{ MNIST}
     \end{subfigure}
     \hspace{1.5cm}
     \begin{subfigure}[b]{0.3\columnwidth}
         \centering
   
 \noindent
\setlength\tabcolsep{0pt}
\begin{tabular}{c >{\bfseries}r @{\hspace{0.7em}}c @{\hspace{0.4em}}c @{\hspace{0.7em}}l}
   & & \multicolumn{2}{c}{\bfseries MobileNet}  \\
  & & \bfseries Right & \bfseries Wrong \\[0.5em]
  &  & \MySCBox{5738} & \MySCBox{1328} \\[2.6em]
  &  & \MySCBox{1136} & \MySCBox{1798}\\
  \\
& & \multicolumn{2}{c}{\bfseries EfficientNet}  \\
  & & \bfseries Right & \bfseries Wrong \\[0.5em]
  &  & \MySCBox{5260} & \MySCBox{1806} \\[2.6em]
  &  & \MySCBox{1053} & \MySCBox{1881}\\
  \\
  & & \multicolumn{2}{c}{\bfseries XceptionNet}  \\
  & & \bfseries Right & \bfseries Wrong \\[0.5em]
  &  & \MySCBox{6138} & \MySCBox{928} \\[2.6em]
  &  & \MySCBox{1388} & \MySCBox{1546}
\end{tabular}
    \caption{ CIFAR10}
     \end{subfigure}
     \hspace{0.65cm}
     \begin{subfigure}[b]{0.3\columnwidth}
         \centering
   
 \noindent
\setlength\tabcolsep{0pt}
\begin{tabular}{c >{\bfseries}r @{\hspace{0.7em}}c @{\hspace{0.4em}}c @{\hspace{0.7em}}l}
   & & \multicolumn{2}{c}{\bfseries MobileNet }  \\
  & & \bfseries Right & \bfseries Wrong \\[0.5em]
 &  & \MySCBox{1} & \MySCBox{300} \\[2.6em]
  &  & \MySCBox{10} & \MySCBox{2722}\\
  \\
  & & \multicolumn{2}{c}{\bfseries EfficientNet}  \\
  & & \bfseries Right & \bfseries Wrong \\[0.5em]
  &  & \MySCBox{35} & \MySCBox{266} \\[2.6em]
  &  & \MySCBox{79} & \MySCBox{2653}\\
  \\
  & & \multicolumn{2}{c}{\bfseries XceptionNet}  \\
  & & \bfseries Right & \bfseries Wrong \\[0.5em]
  &  & \MySCBox{130} & \MySCBox{171} \\[2.6em]
  &  & \MySCBox{337} & \MySCBox{2395}
  \end{tabular}
    \caption{ cBirds}
     \end{subfigure}
      \hspace{0.65cm}
     \begin{subfigure}[b]{0.3\columnwidth}
         \centering
   
 \noindent
\setlength\tabcolsep{0pt}
\begin{tabular}{c >{\bfseries}r @{\hspace{0.7em}}c @{\hspace{0.4em}}c @{\hspace{0.7em}}c}
   & & \multicolumn{2}{c}{\bfseries MobileNet }  \\
  & & \bfseries Right & \bfseries Wrong &\multirow{5}{*}{\rotatebox{90}{\parbox{3.5cm}{\bfseries \centering RseNet vs MobileNet}}}\\[0.5em]
 &  & \MySCBoxB{91958} & \MySCBox{6353} & \\[2.6em]
  &  & \MySCBox{7395} & \MySCBoxB{10617}\\
  \\
  & & \multicolumn{2}{c}{\bfseries EfficientNet}  \\
  & & \bfseries Right & \bfseries Wrong & \multirow{5}{*}{\rotatebox{90}{\parbox{3.5cm}{\bfseries \centering RseNet vs EfficientNet}}}\\[0.5em]
  &  & \MySCBoxB{93362} & \MySCBox{4949} & \\[2.6em]
  &  & \MySCBox{7438} & \MySCBoxB{10574}\\
  \\
  & & \multicolumn{2}{c}{\bfseries XceptionNet}  \\
  & & \bfseries Right & \bfseries Wrong & \multirow{5}{*}{\rotatebox{90}{\parbox{3.5cm}{\bfseries \centering RseNet vs EfficientNet}}}\\[0.5em]
  &  & \MySCBoxB{91459} & \MySCBox{6852} & \\[2.6em]
  &  & \MySCBox{7191} & \MySCBoxB{10821}
\end{tabular}
    \caption{ eMNIST}
     \end{subfigure} 
     
     \hfill
    \caption{Prayatul Matrices of deep learning models}
    \label{fig:compMatDL}
\end{figure*}

\subsubsection{Instance Level Analysis}

Since the prayatul matrix elements are the counts of instance level comparison of outcomes of two ML models and the proposed measures are designed based on it, so the measure values give  instance level comparative performance of the two ML models. To reaffirm this, the indications noted above are analyzed with the prayatul matrices obtained for classification and deep learning models. Clearly, prayatul matrices for deep learning models presented in Fig.~\ref{fig:compMatDL} shows that ResNet mostly having more right decision count in comparison to others as RW count is more than WR count. Also, BR+RW is much higher than BW+WR. This reaffirms the indications in measure based analysis. Specifically, for cBirds dataset, BW is very high implying high negative polarization value. Also, it is clearly visible that high BR+RW values in MNIST dataset, which means ResNet mostly takes right decision and the same is reaffirmation of highly positive polarization value. Similarly, the prayatul matrices for classification algorithms presented in Fig.~\ref{fig:CompMatClass}, Nearest Neighbor mostly having more right decision count. However, as noted earlier, for Linearly Separable dataset WR count is more than RW count so it resulted negative comparative deviation. The different values like Tree Positive (TP), False Positive (FP), False Negative (FN), True Negative (TN) in the confusion matrices of DL models and classification algorithms certainly cannot give these important insights.

\begin{figure*}[!ht]
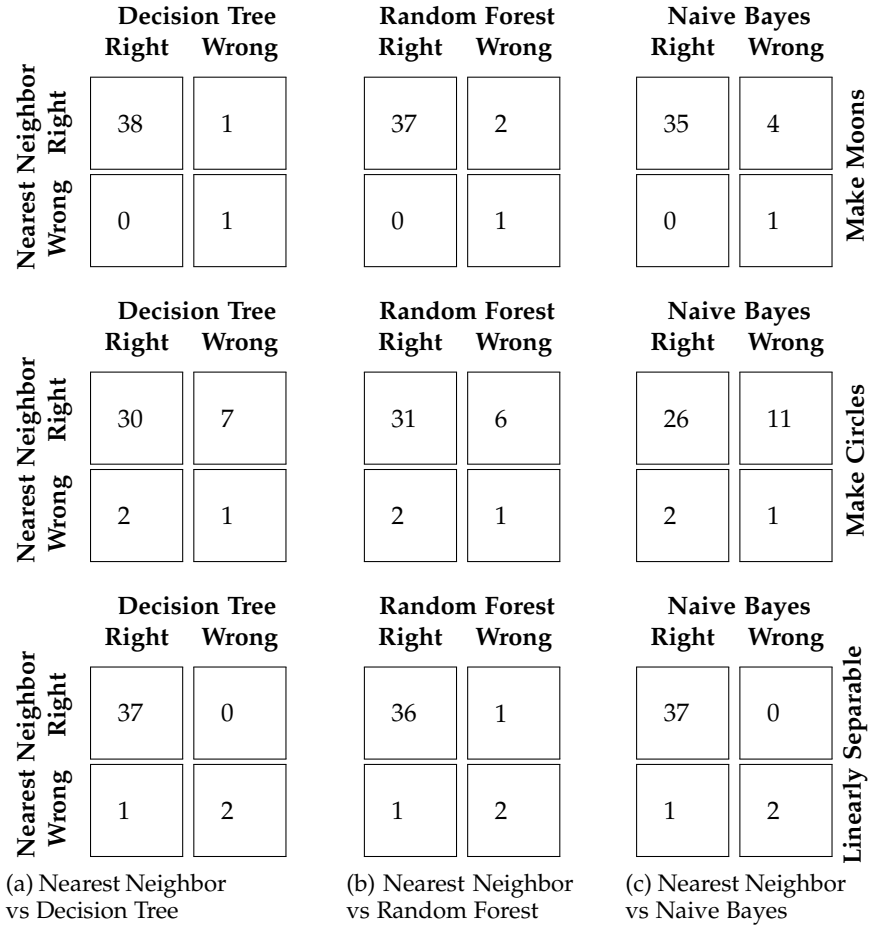

     \centering
   
     \begin{subfigure}[b]{0.33\columnwidth}
         \centering
   
 \noindent
\renewcommand\arraystretch{1}
\setlength\tabcolsep{0pt}
\begin{tabular}{c >{\hspace{0.25em}\bfseries}r @{\hspace{0.7em}}c @{\hspace{0.4em}}c @{\hspace{0.7em}}l}
   & & \multicolumn{2}{c}{\bfseries Decision Tree}  \\
  \multirow{5}{*}{\rotatebox{90}{\parbox{3.15cm}{\bfseries  Nearest Neighbor}}}& & \bfseries Right& \bfseries Wrong \\[0.5em]
  & \multirow{2}{*}{\rotatebox{90}{\hfill Right }} & \MySBox{38} & \MySBox{1} \\[2.4em]
  & \multirow{2}{*}{\rotatebox{90}{Wrong}} & \MySBox{0} & \MySBox{1}\\
  \\
   & & \multicolumn{2}{c}{\bfseries Decision Tree}  \\
  \multirow{5}{*}{\rotatebox{90}{\parbox{3.5cm}{\bfseries \centering Nearest Neighbor}}}& & \bfseries Right& \bfseries Wrong \\[0.5em]
  & \multirow{2}{*}{\rotatebox{90}{Right}} & \MySBox{30} & \MySBox{7} \\[2.4em]
  & \multirow{2}{*}{\rotatebox{90}{Wrong}} & \MySBox{2} & \MySBox{1}\\
  \\
     & & \multicolumn{2}{c}{\bfseries Decision Tree}  \\
  \multirow{5}{*}{\rotatebox{90}{\parbox{3.5cm}{\bfseries \centering Nearest Neighbor}}}& & \bfseries Right& \bfseries Wrong \\[0.5em]
  & \multirow{2}{*}{\rotatebox{90}{Right}} & \MySBox{37} & \MySBox{0} \\[2.4em]
  & \multirow{2}{*}{\rotatebox{90}{ Wrong}} & \MySBox{1} & \MySBox{2}

\end{tabular}
    \caption{Nearest Neighbor vs Decision Tree}
    
     \end{subfigure}\hspace{1.5cm}
     \begin{subfigure}[b]{0.34\columnwidth}
         \centering
   
 \noindent
\renewcommand\arraystretch{1}
\setlength\tabcolsep{0pt}
\begin{tabular}{c >{\bfseries}r @{\hspace{0.4em}}c @{\hspace{0.4em}}c @{\hspace{0.7em}}l}
   \multicolumn{3}{c}{\bfseries Random Forest}\\
   \bfseries Right &&\bfseries Wrong \\[0.5em]
   \MySBox{37} && \MySBox{2}\\[2.4em]
   \MySBox{0} && \MySBox{1} \\
 \\
    \multicolumn{3}{c}{\bfseries Random Forest}\\
   \bfseries Right &&\bfseries Wrong \\[0.5em]
   \MySBox{31} && \MySBox{6}\\[2.4em]
   \MySBox{2} && \MySBox{1} \\
  \\
    \multicolumn{3}{c}{\bfseries Random Forest}\\
   \bfseries Right &&\bfseries Wrong \\[0.5em]
   \MySBox{36} && \MySBox{1}\\[2.4em]
   \MySBox{1} && \MySBox{2} \\ 
\end{tabular}
    \caption{ Nearest Neighbor vs Random Forest}
 
     \end{subfigure}  \hspace{0.5cm}
     \begin{subfigure}[b]{0.33\columnwidth}
       \centering
   
 \noindent
\renewcommand\arraystretch{1}
\setlength\tabcolsep{0pt}
\begin{tabular}{c >{\bfseries}r @{\hspace{0.4em}}c @{\hspace{0.4em}}c @{\hspace{0.7em}}l}
    \multicolumn{3}{c}{\bfseries Naive Bayes}\\
   \bfseries Right &&\bfseries Wrong \\[0.5em]
   \MySBox{35} && \MySBox{4}& \multirow{4}{*}{\rotatebox{90}{\parbox{2.2cm}{\bfseries \centering Make Moons}}}\\[2.4em]
   \MySBox{0} && \MySBox{1}\\
   \\
   \multicolumn{3}{c}{\bfseries Naive Bayes}\\
   \bfseries Right &&\bfseries Wrong \\[0.5em]
   \MySBox{26} && \MySBox{11}& \multirow{4}{*}{\rotatebox{90}{\parbox{2.2cm}{\bfseries \centering Make Circles}}}\\[2.4em]
   \MySBox{2} && \MySBox{1} \\
   \\
    \multicolumn{3}{c}{\bfseries Naive Bayes}\\
   \bfseries Right &&\bfseries Wrong &\multirow{5}{*}{\rotatebox{90}{\parbox{3.5cm}{\bfseries \centering Linearly Separable}}} \\[0.5em]
   \MySBox{37} && \MySBox{0}& \\[2.4em]
   \MySBox{1} && \MySBox{2} \\
\end{tabular}

  \caption{ Nearest Neighbor vs Naive Bayes}
    
     \end{subfigure}

    \caption{Prayatul Matrices of Nearest Neighbor with three other classification algorithms}
 
    \label{fig:CompMatClass}
\end{figure*}

\section{Related Work}
\label{sec:relwork}
The origin of widely used confusion matrix for ML can be traced back to Kerl Pearson's work~\cite{pearson1904theory} in 1904, where he referred it as contingency table. Later on the term confusion matrix~\cite{townsend1971theoretical} was used in the context of psychology. While the confusion matrix used in ML to evaluate both classification~\cite{Zhang2017} and clustering algorithms~\cite{xu2005survey} is prepared on the basis of ground truth labels and predicted labels. The conventional way to compare performance of two ML models is to prepare two separate confusion matrices for each model and generate scores to compare. One of the major drawbacks of this approach is it lacks direct comparison of ML models on individual instances of the dataset. The proposed direct comparison approach prepares a single comparison matrix by comparing instance level outcomes of two ML models w.r.t. ground truth. Earlier Dietterich~\cite{dietterich1998approximate} demonstrated a similar idea to construct comparison matrix for misclassified instances two ML model as an application of McNemar's test~\cite{everitt1977r}. However, Dietterich's approach focuses simply on misclassified instances, and analysis is done under the null hypothesis \emph{the two algorithms should have the same error rate}. Whereas, proposed approach considers all instances i.e. both correctly classified and misclassified instances to prepare the comparison matrix. Moreover, analysis is done on the basis of scores not null hypothesis. In recent years, few attempts to prepare confusion matrix alternatively are done. A three-way confusion matrix is designed that visualizes the degree of algorithm confusion within different classes~\cite{xu2020three}. The construction of basic probability assignment (BPA) based on the confusion matrix has also been studied in the context of classification problem~\cite{deng2016improved}. Simplified confusion matrix visualization techniques are also designed for better visualization of classes~\cite{susmaga2004confusion,beauxis2014simplifying}. However, none of these approaches considered direct comparison of ML models at instance level or simply comparison at instance level.

        

\section{Conclusion}
\label{sec:conclusion}
In this paper, an alternative approach is proposed for direct comparison of supervised ML models at instance level of test datasets. A direct comparison matrix called \emph{Prayatul Matrix} is prepared to account instance level comparison of outcomes of two ML algorithms. Five measures are designed based on the elements of prayatul matrix, which include comparative deviation, polarization, comparative rightness, effective rightness and effective superiority. Results on both classification and deep learning models showed that these measures can give some important insight about the outcomes of algorithms by comparing those directly at instance level. Also, these measures are equally capable of indicating the right decisions of algorithms as conventional measures like accuracy, recall and precision. For instance, two ML algorithms having same accuracy doesn't mean that their outcomes are same at instance level, comparative deviation and polarization give indications on such differences.  While comparative rightness, effective rightness and effective superiority measures give the indication for right decisions of comparing algorithms. Moreover, interpretation of the measures is simple, the rule-of-thumbs for end-users is highly positive values imply good performance and negative implies bad performance of primary algorithm. The proposed direct comparison approach has certain limitations in the context of clustering as it requires ground truth and like confusion matrix based measures, it will require prior adjustment of clustering labels.

 \section*{Acknowledgements}
 
 This work is supported by the Science and Engineering Board (SERB), Department of Science and Technology (DST) of the Government of India under Grant No. ECR/2018/000204.



\ifCLASSOPTIONcaptionsoff
  \newpage
\fi



%

\bibliographystyle{IEEEtran}
\bibliography{myrefs}

\begin{thebibliography}{10}
\providecommand{\url}[1]{#1}
\csname url@samestyle\endcsname
\providecommand{\newblock}{\relax}
\providecommand{\bibinfo}[2]{#2}
\providecommand{\BIBentrySTDinterwordspacing}{\spaceskip=0pt\relax}
\providecommand{\BIBentryALTinterwordstretchfactor}{4}
\providecommand{\BIBentryALTinterwordspacing}{\spaceskip=\fontdimen2\font plus
\BIBentryALTinterwordstretchfactor\fontdimen3\font minus
  \fontdimen4\font\relax}
\providecommand{\BIBforeignlanguage}[2]{{%
\expandafter\ifx\csname l@#1\endcsname\relax
\typeout{** WARNING: IEEEtran.bst: No hyphenation pattern has been}%
\typeout{** loaded for the language `#1'. Using the pattern for}%
\typeout{** the default language instead.}%
\else
\language=\csname l@#1\endcsname
\fi
#2}}
\providecommand{\BIBdecl}{\relax}
\BIBdecl

\bibitem{sokolova2009systematic}
M.~Sokolova and G.~Lapalme, ``A systematic analysis of performance measures for
  classification tasks,'' \emph{Information processing \& management}, vol.~45,
  no.~4, pp. 427--437, 2009.

\bibitem{jordan2015machine}
M.~I. Jordan and T.~M. Mitchell, ``Machine learning: Trends, perspectives, and
  prospects,'' \emph{Science}, vol. 349, no. 6245, pp. 255--260, 2015.

\bibitem{Pouyanfar2018}
S.~Pouyanfar, S.~Sadiq, Y.~Yan, H.~Tian, Y.~Tao, M.~P. Reyes, M.-L. Shyu, S.-C.
  Chen, and S.~S. Iyengar, ``A survey on deep learning: Algorithms, techniques,
  and applications,'' \emph{ACM Comput. Surv.}, vol.~51, no.~5, sep 2018.

\bibitem{fatima2017survey}
M.~Fatima, M.~Pasha \emph{et~al.}, ``Survey of machine learning algorithms for
  disease diagnostic,'' \emph{Journal of Intelligent Learning Systems and
  Applications}, vol.~9, no.~01, p.~1, 2017.

\bibitem{qiu2016survey}
J.~Qiu, Q.~Wu, G.~Ding, Y.~Xu, and S.~Feng, ``A survey of machine learning for
  big data processing,'' \emph{EURASIP Journal on Advances in Signal
  Processing}, vol. 2016, no.~1, pp. 1--16, 2016.

\bibitem{Zhang2017}
C.~Zhang, C.~Liu, X.~Zhang, and G.~Almpanidis, ``An up-to-date comparison of
  state-of-the-art classification algorithms,'' \emph{Expert Systems with
  Applications}, vol.~82, pp. 128--150, 2017.

\bibitem{xu2005survey}
R.~Xu and D.~Wunsch, ``Survey of clustering algorithms,'' \emph{IEEE
  Transactions on neural networks}, vol.~16, no.~3, pp. 645--678, 2005.

\bibitem{Ferri2009}
C.~Ferri, J.~Hernández-Orallo, and R.~Modroiu, ``An experimental comparison of
  performance measures for classification,'' \emph{Pattern Recognition
  Letters}, vol.~30, no.~1, pp. 27--38, 2009.

\bibitem{pedregosa2011scikit}
F.~Pedregosa, G.~Varoquaux, A.~Gramfort, V.~Michel, B.~Thirion, O.~Grisel,
  M.~Blondel, P.~Prettenhofer, R.~Weiss, V.~Dubourg \emph{et~al.},
  ``Scikit-learn: Machine learning in python,'' \emph{the Journal of machine
  Learning research}, vol.~12, pp. 2825--2830, 2011.

\bibitem{deng2012mnist}
L.~Deng, ``The mnist database of handwritten digit images for machine learning
  research,'' \emph{IEEE Signal Processing Magazine}, vol.~29, no.~6, pp.
  141--142, 2012.

\bibitem{krizhevsky2009learning}
A.~Krizhevsky, G.~Hinton \emph{et~al.}, ``Learning multiple layers of features
  from tiny images,'' \emph{Tech Report}, 2009.

\bibitem{WelinderEtal2010}
P.~Welinder, S.~Branson, T.~Mita, C.~Wah, F.~Schroff, S.~Belongie, and
  P.~Perona, ``{Caltech-UCSD Birds 200},'' California Institute of Technology,
  Tech. Rep. CNS-TR-2010-001, 2010.

\bibitem{cohen_afshar_tapson_schaik_2017}
G.~Cohen, S.~Afshar, J.~Tapson, and A.~V. Schaik, ``Emnist: Extending mnist to
  handwritten letters,'' \emph{2017 International Joint Conference on Neural
  Networks (IJCNN)}, 2017.

\bibitem{pavan2021multi}
M.~Pavan~Kumar and P.~Jayagopal, ``Multi-class imbalanced image classification
  using conditioned gans,'' \emph{International Journal of Multimedia
  Information Retrieval}, vol.~10, no.~3, pp. 143--153, 2021.

\bibitem{he2016identity}
K.~He, X.~Zhang, S.~Ren, and J.~Sun, ``Identity mappings in deep residual
  networks,'' in \emph{European conference on computer vision}.\hskip 1em plus
  0.5em minus 0.4em\relax Springer, 2016, pp. 630--645.

\bibitem{sandler2018mobilenetv2}
M.~Sandler, A.~Howard, M.~Zhu, A.~Zhmoginov, and L.-C. Chen, ``Mobilenetv2:
  Inverted residuals and linear bottlenecks,'' in \emph{Proceedings of the IEEE
  conference on computer vision and pattern recognition}, 2018, pp. 4510--4520.

\bibitem{tan2019efficientnet}
M.~Tan and Q.~Le, ``Efficientnet: Rethinking model scaling for convolutional
  neural networks,'' in \emph{International conference on machine
  learning}.\hskip 1em plus 0.5em minus 0.4em\relax PMLR, 2019, pp. 6105--6114.

\bibitem{chollet2017xception}
F.~Chollet, ``Xception: Deep learning with depthwise separable convolutions,''
  in \emph{Proceedings of the IEEE conference on computer vision and pattern
  recognition}, 2017, pp. 1251--1258.

\bibitem{classifierScikit}
``Classifier comparison—scikit-learn 1.1.1 documentation,''
  \url{https://scikit-learn.org/stable/auto\_examples/classification/plot\_classifier\_com
  parison.html}, accessed: 2022-09-03.

\bibitem{pearson1904theory}
K.~Pearson, \emph{On the theory of contingency and its relation to association
  and normal correlation}.\hskip 1em plus 0.5em minus 0.4em\relax Dulau and
  Company London, UK, 1904, vol.~1.

\bibitem{townsend1971theoretical}
J.~T. Townsend, ``Theoretical analysis of an alphabetic confusion matrix,''
  \emph{Perception \& Psychophysics}, vol.~9, no.~1, pp. 40--50, 1971.

\bibitem{dietterich1998approximate}
T.~G. Dietterich, ``Approximate statistical tests for comparing supervised
  classification learning algorithms,'' \emph{Neural computation}, vol.~10,
  no.~7, pp. 1895--1923, 1998.

\bibitem{everitt1977r}
B.~Everitt, ``r$\times$ c contingency tables,'' in \emph{The analysis of
  contingency tables}.\hskip 1em plus 0.5em minus 0.4em\relax Springer, 1977,
  pp. 38--66.

\bibitem{xu2020three}
J.~Xu, Y.~Zhang, and D.~Miao, ``Three-way confusion matrix for classification:
  A measure driven view,'' \emph{Information sciences}, vol. 507, pp. 772--794,
  2020.

\bibitem{deng2016improved}
X.~Deng, Q.~Liu, Y.~Deng, and S.~Mahadevan, ``An improved method to construct
  basic probability assignment based on the confusion matrix for classification
  problem,'' \emph{Information Sciences}, vol. 340, pp. 250--261, 2016.

\bibitem{susmaga2004confusion}
R.~Susmaga, ``Confusion matrix visualization,'' in \emph{Intelligent
  information processing and web mining}.\hskip 1em plus 0.5em minus
  0.4em\relax Springer, 2004, pp. 107--116.

\bibitem{beauxis2014simplifying}
E.~Beauxis-Aussalet and L.~Hardman, ``Simplifying the visualization of
  confusion matrix,'' in \emph{26th Benelux Conference on Artificial
  Intelligence (BNAIC)}, 2014.

\end{thebibliography}

%

\begin{IEEEbiography}[{\includegraphics[width=.9in,height=1.2in,clip,keepaspectratio]{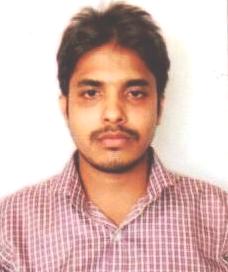}}]{Dr. Anupam Biswas}
  is currently working as an Assistant Professor in the Department of Computer Science and Engineering, National Institute of Technology Silchar, Assam, India. He has received Ph.D. degree in computer science and engineering from Indian Institute of Technology (BHU), Varanasi, India in 2017. He has received M. Tech. Degree in computer science and engineering from Motilal Nehru National Institute of Technology Allahabad, Prayagraj, India in 2013 and B. E.  degree in computer science and engineering from Jorhat Engineering College, Jorhat, Assam in 2011. He has published several research papers in transactions, reputed international journals, conference and book chapters. His research interests include Machine learning, Social Networks, Computational music, Information retrieval, and Evolutionary computation. He has four granted patents, out of with three are Germany patents and one South African patent. He is the Principal Investigator of two on-going DST-SERB sponsored research projects in the domain of machine learning and evolutionary computation. He has served as Program Chair of International Conference on Big Data, Machine Learning and Applications (BigDML 2019) and Publicity Chair of BigDML 2021. He has served as General Chair of 25th International Symposium Frontiers of Research in Speech and Music (FRSM 2020) and co-edited the proceedings of FRSM 2020 published as book volume in Springer AISC Series. He has edited five books that are published by various series of Springer. Also edited a book with Advances in Computers book Series of Elsevier.
\end{IEEEbiography}





\vfill

\end{document}